\theoremstyle{plain}
\newtheorem{theorem}{Theorem}[section]
\newtheorem{proposition}[theorem]{Proposition}
\newtheorem{lemma}[theorem]{Lemma}
\newtheorem{corollary}[theorem]{Corollary}
\theoremstyle{definition}
\newtheorem{definition}[theorem]{Definition}
\newtheorem{assumption}[theorem]{Assumption}
\theoremstyle{remark}
\newtheorem{remark}[theorem]{Remark}
\icmltitlerunning{Submission and Formatting Instructions for ICML 2026}
\begin{document}

\twocolumn[
  \icmltitle{GaussDetect-LiNGAM:Causal Direction\\ Identification without Gaussianity test}



  \icmlsetsymbol{equal}{*}

  \begin{icmlauthorlist}
    \icmlauthor{Ziyi Ding}{equal,SIGS}
 \icmlauthor{Xiao-Ping Zhang}{SIGS}
  \end{icmlauthorlist}

  \icmlaffiliation{SIGS}{Tsinghua Shenzhen International Graduate School, Tsinghua University, Shenzhen, China}

  \icmlcorrespondingauthor{Ziyi Ding}{dingzy25@mails.tsinghua.edu.cn}
  \icmlcorrespondingauthor{Xiao-Ping Zhang}{xpzhang@ieee.org}

  \icmlkeywords{Machine Learning, ICML}

  \vskip 0.3in
]



\printAffiliationsAndNotice{}  

\begin{abstract}
We propose GaussDetect-LiNGAM, a novel approach for bivariate causal discovery that eliminates the need for explicit Gaussianity tests by leveraging a fundamental equivalence between noise Gaussianity and residual independence in the reverse regression. Under the standard LiNGAM assumptions of linearity, acyclicity, and exogeneity, we prove that the Gaussianity of the forward-model noise is equivalent to the independence between the regressor and residual in the reverse model. This theoretical insight allows us to replace fragile and sample-sensitive Gaussianity tests with robust kernel-based independence tests. Experimental results validate the equivalence and demonstrate that GaussDetect-LiNGAM maintains high consistency across diverse noise types and sample sizes, while reducing the number of tests per decision (TPD). Our method enhances both the efficiency and practical applicability of causal inference, making LiNGAM more accessible and reliable in real-world scenarios.
\end{abstract}

\section{Introduction}

Causal inference plays a critical role in social sciences and many other 
applied fields, especially when dealing with complex causal relationships.
 Structural Causal Models (SCM) provide a theoretical framework for such 
 inference, capturing the causal relationships between variables through
  explicit structural equation models\cite{10.5555/1642718,spirtesCausationPredictionSearch}.

The Linear Non-Gaussian Acyclic Model (LiNGAM) was introduced to
 address this issue\cite{shimizuLinearNonGaussianAcyclic2006,hyvarinenIndependentComponentAnalysis2000,shenFastKernelICA2007}. By breaking the assumption of Gaussian noise 
 and utilizing the non-Gaussian nature of noise, LiNGAM effectively
  distinguishes causal relationships between variables, allowing for
   a more accurate identification of causal directions.

However, the original formulation of the LiNGAM model relies
 on a constructive algorithm and does not provide strict mathematical
  theorems to prove its identifiability. DirectLiNGAM further addresses 
  the identifiability issue in multivariate cases, but the proofs of these
   theorems strictly follow the assumptions of non-Gaussianity, linearity, acyclicity, 
   and exogeneity\cite{shimizuDirectLiNGAMDirectMethod2011}. Although 
   pairwise-LiNGAM effectively uses the likelihood-ratio approach 
   to determine the causal direction between two variables, it still 
   requires prior knowledge of the four key assumptions: non-Gaussianity,
    linearity, acyclicity, and exogeneity\cite{hyvarinenPairwiseLikelihoodRatios2013}.
   The challenge is that, in many real-world situations,
    noise may be non-Gaussian but cannot be effectively detected, which 
    significantly limits the applicability of the LiNGAM model.

Specifically, this paper focuses on the analysis 
of the bivariate LiNGAM model, as it is often difficult to ensure that all
 variables have non-Gaussian noise in practical applications. Compared to 
 the multivariate case, the bivariate model is more tractable in handling
  local causal relationships and offers better operational feasibility.

This study addresses this issue by proving that,in the bivariate LiNGAM model, 
the Gaussianity test for the noise in the forward model and the independence 
test between the independent variable and noise in the reverse model are 
consistently equivalent. 

A key contribution of this study is the demonstration that,
as long as the bivariate variables satisfy the assumptions of linearity,
  acyclicity, and exogeneity, the non-Gaussian nature of the noise can be 
  directly assessed through the independence test in the reverse model. 
  This approach eliminates the need for prior non-Gaussianity tests,
   allowing researchers to skip the step of checking for non-Gaussian 
   noise distribution. Instead, by focusing solely on the independence 
   test, the process is simplified, making it more efficient. This enables
    the use of various robust and reliable independence testing methods,such as kernel independence testing\cite{akahoKernelMethodCanonical,bachKernelIndependentComponent2003}, 
    significantly improving the practicality and applicability of the LiNGAM 
    model in real-world causal inference tasks.

\section{Notation}

Let \( X \) and \( Y \) be random variables. The joint probability density function of \( X \) and \( Y \) is denoted by \( f_{X,Y}(x, y) \), and the marginal probability density functions of \( X \) and \( Y \) are denoted by \( f_X(x) \) and \( f_Y(y) \), respectively. That is,
$f_{X,Y}(x,y)$ is the joint probability density function of
$X$ and~$Y$,
$f_X(x)$ is the marginal probability density function of~$X$,  
$f_Y(y)$ is the marginal probability density function of~$Y$.

In such situations, if the noise terms are non-Gaussian, we can utilize the LiNGAM model to identify the causal direction between the variables. This is possible because non-Gaussian noise helps distinguish between the direction of causality, even when the relationship between the two variables is complex or bidirectional.

\begin{definition}[Bivariate LiNGAM]
    In the case of two observed variables \( X \) and \( Y \), the Linear Non-Gaussian Acyclic Model (LiNGAM) assumes the following linear relationships with non-Gaussian noise:
    \begin{align}
        Y &= a X + \epsilon_Y\\
        X &= b Y + \epsilon_X
    \end{align}
        
    where:\( X \) and \( Y \) are the observed variables,\( \epsilon_X \) and \( \epsilon_Y \) are the non-Gaussian noise 
        terms for \( X \) and \( Y \), respectively.
    The model assumes that the noise terms \( \epsilon_X \) and 
    \( \epsilon_Y \) are non-Gaussian. 
\end{definition}

In the context of causal inference, two models are often discussed: the forward model and the reverse model. 

Forward model: In this model, we assume that the independent variable \( X \) influences the dependent variable \( Y \), typically expressed as \( Y = a X + \epsilon_Y \), where \( a \) is the regression coefficient, and \( \epsilon_Y \) is the noise term.
  
Reverse model: In the reverse model, we assume that the dependent variable \( Y \) influences the independent variable \( X \), typically expressed as \( X = b Y + \epsilon_X \), where \( b \) is the regression coefficient, and \( \epsilon_X \) is the noise term.

The key challenge is to determine the correct causal direction between the variables, especially when both models seem plausible. The LiNGAM model helps resolve this by leveraging the non-Gaussianity of the noise terms to identify the true direction of causality.

In massive inference and reproducibility studies,
 the same null hypothesis is often examined by more 
 than one procedure—frequentist p-value versus Bayes
  factor, Wald test versus likelihood-ratio test,
   or an original codebase versus its 
   re-implementation. If two such procedures
    always reach identical reject/retain decisions,
     the simpler, faster, 
     or more numerically stable one 
     can be deployed without any risk of 
     contradiction. To justify this substitution 
     rigorously, we first need a sharp, 
     sample-level guarantee that no future 
     data set will ever make the two tests
      disagree.Therefore, Definition~\ref{def2} is formulated as follows:

\begin{definition}[Consistent Equivalence of Hypothesis Tests]
    \label{def2}
    Let there be two tests \( \varphi_1(S) \) and \( \varphi_2(S) \), 
    both used to test the same hypothesis. If the null hypothesis is 
    \( H_0 \) and the alternative hypothesis is \( H_1 \), 
    the test function \( \varphi_i(X) \) (where \( i \in \{1, 2\} \)) 
    indicates whether the null hypothesis \( H_0 \) is rejected based 
    on the sample data \( S \). Specifically:
    \begin{equation}
    \varphi_i(S) = 
    \begin{cases}
        1 & \text{if the null hypothesis is rejected,} \\
        0 & \text{if the null hypothesis is not rejected.}
    \end{cases} 
    \end{equation}
    The two tests \( \varphi_1(S) \) and \( \varphi_2(S) \) 
    are said to be equivalent
     (in the sense of consistent conclusions) if for all \( S \),
    \begin{equation}
    \varphi_1(S) = \varphi_2(S)
\end{equation}
\end{definition}

The definition of the equivalence of hypothesis tests is well-defined because it provides a clear and unambiguous description of two hypothesis tests \( \varphi_1(X) \) and \( \varphi_2(X) \), including their decision rules (reject or fail to reject \( H_0 \)). The equivalence condition requires that the two tests yield the same result for all sample data \( X \), ensuring consistency. The condition is both sufficient and necessary: if the tests are equivalent, they must give the same outcome for every possible \( X \). Furthermore, the definition is mathematically rigorous, using standard notation with no ambiguity, and it is universally applicable to any hypothesis test. Thus, this definition satisfies the criteria for being a well-defined concept.

\section{Problem Formulation}

Consider two competing bivariate linear acyclic models. The goal is to determine the causal direction \textbf{without prior knowledge of whether the noise is non-Gaussian}.
Forward model:$ Y = aX + \epsilon_Y,\; X \perp \epsilon_Y.
$ and Reverse model:$ X = bY + \epsilon_X,\; Y \perp \epsilon_X$.

Two sets of hypothesis tests are defined: one for independence tests (IT) and one for Gaussianity tests (GT):
Independence Tests (IT):
\begin{equation}
    \label{eq1}
    H_{IT_{10}}: Y \perp \epsilon_X, \;H_{IT_{11}}: Y \not\perp \epsilon_X 
\end{equation}
\begin{equation}
    \label{eq2}
    H_{IT_{20}}: X \perp \epsilon_Y, \;H_{IT_{21}}: X \not\perp \epsilon_Y
\end{equation}
Gaussianity Tests (GT):
\begin{equation}
    \label{eq3}
H_{GT_{10}}: \epsilon_Y \sim N(0, \sigma_Y^2), \; H_{GT_{11}}: \epsilon_Y \not\sim \mathcal{N}(0,\sigma_Y^{2})
\end{equation}
\begin{equation}
    \label{eq4}
    H_{GT_{20}}: \epsilon_X \sim N(0, \sigma_X^2), \; H_{GT_{21}}: \epsilon_X \not\sim \mathcal{N}(0,\sigma_X^{2})
    \end{equation}
Here, $\varphi_{IT_1}, \varphi_{IT_2}$ are the decision variables for the independence tests, and $\varphi_{GT_1}, \varphi_{GT_2}$ are the decision variables for the Gaussianity tests, both taking values from $\{0, 1\}$ (where $1$ means do not reject, and $0$ means reject).

If the data generation is assumed to follow the forward model, then $H_{IT_{10}}$ and $H_{GT_{10}}$ should be tested. On the other hand, if the data generation follows the reversed model, then $H_{IT_{20}}$ and $H_{GT_{20}}$ should be tested.

Next, the expected equivalence will be discussed. Under the concept of consistent equivalence, it is expected that:
\begin{equation}
    \varphi_{IT_1} = \varphi_{GT_1}
\end{equation}
\begin{equation}
    \varphi_{IT_2} = \varphi_{GT_2}
\end{equation}

This issue will be explored further, with the expectation that within this model class, independence tests (IT) and Gaussianity tests (GT) will be considered equivalent at the sample level. If this equivalence holds, independence tests can be used to replace Gaussianity tests.

Based on the equivalence, the direction identification via independence tests is considered. The question to be explored is whether a criterion or algorithm can be developed to determine the direction (forward or reversed) under the assumptions of linearity, no cycles, and exogeneity. If the direction cannot be determined, the issue is how the algorithm should handle cases of "undetermined" or "inconclusive."

It is worth mentioning that this does not remove the role of non-Gaussianity for identifiability: if the noise is Gaussian, the direction is not identifiable. Rather, it removes the need to \emph{know in advance} whether the noise is non-Gaussian; the independence test outcomes themselves indicate whether one is in the identifiable (non-Gaussian) or non-identifiable (Gaussian) regime.

\section{Consistent Equivalence of Gaussianity Test and Independence Test in the LiNGAM}

This section focuses on the relationship between the Gaussianity 
test and the independence test in the LiNGAM model. 
First, we present Lemma 1
\cite{darmoisAnalyseGeneraleLiaisons1953,lukacsPropertyNormalDistribution1954}:

\begin{lemma}[Skitovich--Darmois Theorem]
    Let $X_1,\allowbreak X_2,\allowbreak \ldots,\allowbreak X_n$ be independent random variables.  
    If there exist non-zero constants $c_1, c_2, \dots, c_n$, $ d_1, d_2, \dots, d_n$ such that the linear combinations  
    $Y_1 = \sum_{i=1}^{n} c_i X_i$ and $Y_2 = \sum_{j=1}^{n} d_j X_j$ are independent,  
    then $X_1, X_2, \ldots, X_n$ are normally distributed.
    \label{lemma1}
    \end{lemma}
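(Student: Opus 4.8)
The plan is to prove the theorem through characteristic functions, converting the independence hypothesis into a functional equation and then showing that this equation forces each summand's log-characteristic function to be a quadratic polynomial. Let $\phi_k$ denote the characteristic function of $X_k$. Since the $X_k$ are independent, the joint characteristic function of $(Y_1,Y_2)$ factors, and the independence hypothesis $Y_1 \perp Y_2$ makes it split as a product of the two marginals. Writing each marginal again through the independence of the $X_k$, I would first derive
\begin{equation}
\prod_{k=1}^n \phi_k(c_k s + d_k t) = \prod_{k=1}^n \phi_k(c_k s)\;\prod_{k=1}^n \phi_k(d_k t)
\end{equation}
for all real $s,t$, where the left side comes from expanding $\mathbb{E}[e^{i(sY_1+tY_2)}] = \prod_k \phi_k(c_k s + d_k t)$.

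Next I would pass to logarithms. Because each $\phi_k$ is continuous with $\phi_k(0)=1$, there is a neighborhood of the origin on which $\phi_k$ is non-vanishing and a continuous branch $\psi_k = \log \phi_k$ is well defined with $\psi_k(0)=0$. On this neighborhood the product relation becomes the additive functional equation
\begin{equation}
\sum_{k=1}^n \psi_k(c_k s + d_k t) = \sum_{k=1}^n \psi_k(c_k s) + \sum_{k=1}^n \psi_k(d_k t).
\end{equation}
The right-hand side is a sum of a function of $s$ alone and a function of $t$ alone, so applying the mixed second-order difference operator $\Delta_{h,\ell}$ (differencing once in $s$ with step $h$ and once in $t$ with step $\ell$) annihilates it, leaving a constraint on the $\psi_k$ alone. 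The core of the argument is then the finite-difference method: iterating such mixed differences and grouping any coefficient pairs $(c_k,d_k)$ that happen to be proportional, I would show that the only continuous solutions are $\psi_k(t) = \alpha_k t^2 + \beta_k t$, polynomials of degree at most two with vanishing constant term.

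Finally I would translate this back into a distributional statement. The Hermitian symmetry $\overline{\phi_k(t)} = \phi_k(-t)$ forces $\alpha_k$ to be real and $\beta_k = i\mu_k$ purely imaginary, and the bound $|\phi_k| \le 1$ forces $\alpha_k = -\tfrac12\sigma_k^2 \le 0$; hence $\phi_k(t) = \exp(i\mu_k t - \tfrac12 \sigma_k^2 t^2)$, which is exactly the characteristic function of $N(\mu_k,\sigma_k^2)$, so each $X_k$ is Gaussian. The main obstacle I anticipate is regularity: characteristic functions need not be differentiable, so one cannot simply differentiate the functional equation, and the logarithm is only available locally. The finite-difference argument is designed precisely to avoid differentiation and to work with merely continuous functions; once each candidate $\phi_k = e^{\text{(quadratic)}}$ is identified near the origin, it is entire and agrees with the characteristic function there, so analytic continuation extends the conclusion from the neighborhood to all of $\mathbb{R}$. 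Carefully managing the non-vanishing set on which $\psi_k$ is defined and the bookkeeping for proportional coefficient pairs is where the real technical work lies.
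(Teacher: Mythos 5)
First, note that the paper does not actually prove this lemma: it is the classical Skitovich--Darmois theorem, imported as a known result with citations to Darmois (1953) and Lukacs (1954), so there is no in-paper argument to compare yours against. Your outline follows the standard textbook route (pass to characteristic functions, turn independence of $Y_1$ and $Y_2$ into a multiplicative functional equation, take local logarithms, and kill the separated right-hand side with mixed finite differences), which is indeed the correct skeleton of the classical proof.

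As a proof, however, it has two genuine gaps. The first is that the entire mathematical content of the theorem sits inside the sentence where you say you ``would show'' that the only continuous solutions are $\psi_k(t)=\alpha_k t^2+\beta_k t$: the iterated mixed-difference argument reducing the functional equation to polynomials of degree at most two is the hard part, and in the case where several ratios $c_k/d_k$ coincide the differencing only tells you that the \emph{sum} of the grouped $\psi_k$ is quadratic, i.e.\ that the corresponding block of variables has a normal sum. To split that into normality of each individual $X_k$ you need Cram\'er's decomposition theorem (independent summands of a normally distributed variable are themselves normal), which your sketch never invokes; without it the proportional-coefficients case does not close. The second gap is the globalization step. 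You write that the identified $\phi_k=e^{\text{quadratic}}$ ``is entire and agrees with the characteristic function there, so analytic continuation extends the conclusion,'' but a characteristic function is not a priori analytic, and two distinct distributions can have characteristic functions that coincide on a neighborhood of the origin, so agreement with an entire function on an interval does not by itself propagate to all of $\mathbb{R}$. The standard repairs are either to first prove from the functional equation that every $\phi_k$ is zero-free on the whole line, so that $\psi_k$ and the difference argument are global from the outset, or to observe that matching $e^{\alpha_k t^2+\beta_k t}$ near $0$ forces all moments of $X_k$ to exist and equal the moments of a normal law, whence $\phi_k$ is in fact entire and moment-determinacy of the Gaussian finishes the job. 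One of these routes has to be carried out explicitly; as written, the final step does not follow.
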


The conclusion of Lemma~\ref{lemma1} is non-trivial. Generally, we can infer independence from the properties of the distribution, but Lemma~\ref{lemma1} (the Skitovich-Darmois theorem) allows us to infer the distributional properties of random variables based on their independence.

\begin{theorem}
    \label{thm1}
    If \( X \) and \( Y \) satisfy the forward model \( Y = aX + \epsilon_Y \) and the reverse model \( X = bY + \epsilon_X \), a sufficient and necessary condition for \( \binom{Y}{\epsilon_X} \) to be independent is that \( \binom{X}{\epsilon_Y} \) is independent and both follow a normal distribution.
\end{theorem}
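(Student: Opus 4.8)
The plan is to reduce both sides of the claimed equivalence to a single algebraic picture in which $Y$ and $\epsilon_X$ appear as two linear combinations of the \emph{independent} pair $(X,\epsilon_Y)$ supplied by the forward model, and then to read off the equivalence from Lemma~\ref{lemma1} in one direction and from the Gaussian ``uncorrelated implies independent'' fact in the other. First I would fix $\epsilon_X := X - bY$ with $b = \operatorname{Cov}(X,Y)/\operatorname{Var}(Y)$ the reverse regression coefficient, so that $\epsilon_X$ is by construction uncorrelated with $Y$. Substituting the forward equation $Y = aX + \epsilon_Y$ gives the representation
\begin{equation}
    Y = aX + \epsilon_Y, \qquad \epsilon_X = (1-ab)X - b\,\epsilon_Y,
\end{equation}
so that both $Y$ and $\epsilon_X$ are linear forms in $X$ and $\epsilon_Y$, which are independent under the forward model.

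Before applying Lemma~\ref{lemma1} I would check that the relevant coefficients are non-zero. Writing $a = \operatorname{Cov}(X,Y)/\operatorname{Var}(X)$ gives $ab = \rho_{XY}^2$, the squared correlation, which lies strictly below $1$ whenever the noise is non-degenerate; hence $1-ab \neq 0$, while $b \neq 0$ exactly when $a \neq 0$. The degenerate case $a=0$ (i.e.\ $X$ and $Y$ already independent) I would dispose of separately. For the direction $(\Rightarrow)$, assuming $Y \perp \epsilon_X$ and using the forward independence $X \perp \epsilon_Y$, the two non-degenerate linear combinations $Y$ and $\epsilon_X$ of the independent variables $X,\epsilon_Y$ are themselves independent, so Lemma~\ref{lemma1} (Skitovich--Darmois) forces both $X$ and $\epsilon_Y$ to be normal; together with the standing independence this is exactly the right-hand condition. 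For the direction $(\Leftarrow)$, if $X \perp \epsilon_Y$ with both normal, then $(X,\epsilon_Y)$ is jointly Gaussian, hence so is its linear image $(Y,\epsilon_X)$; since $\operatorname{Cov}(Y,\epsilon_X)=0$ by the choice of $b$, joint Gaussianity upgrades this zero correlation to genuine independence, giving $Y \perp \epsilon_X$.

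I expect the main obstacle to be bookkeeping rather than depth: one must verify the non-degeneracy $1-ab\neq 0$ and $b\neq 0$ so that Skitovich--Darmois is genuinely applicable (a hypothesis the theorem statement leaves implicit), and one must use the \emph{regression} value of $b$ --- not an arbitrary coefficient --- since it is precisely $\operatorname{Cov}(Y,\epsilon_X)=0$ that drives the Gaussian half of the argument. A secondary point worth stating cleanly is that ``both follow a normal distribution'' must refer to $X$ and $\epsilon_Y$, the independent pair to which the theorem applies, and that their independence-plus-normality is equivalent to joint bivariate normality, so that the two formulations of the right-hand side coincide.
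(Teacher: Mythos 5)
Your proposal is correct and follows essentially the same route as the paper's own proof: the same linear map sending $(X,\epsilon_Y)$ to $(Y,\epsilon_X)$ with matrix $\bigl(\begin{smallmatrix} a & 1 \\ 1-ba & -b\end{smallmatrix}\bigr)$, Skitovich--Darmois (Lemma~\ref{lemma1}) to deduce normality from $Y \perp \epsilon_X$, and joint Gaussianity plus zero covariance for the converse. If anything, your write-up is tighter at the two points where the paper is loose: you obtain $\operatorname{Cov}(Y,\epsilon_X)=0$ directly from taking $b$ to be the regression coefficient (the paper instead asserts that the transformation matrix is orthogonal, which is false in general), and you explicitly verify the non-vanishing of the coefficients $b$ and $1-ab$ that Skitovich--Darmois requires.
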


\begin{proof}
    \textbf{Sufficiency:}  
    Assume that the components of \( \binom{X}{\epsilon_Y} \), namely \( X \) and \( \epsilon_Y \), are independent, and both \( X \) and \( \epsilon_Y \) follow normal distributions.  
    Furthermore, we have the following relationship between \( \binom{Y}{\epsilon_X} \) and \( \binom{X}{\epsilon_Y} \):

    \[
    \binom{Y}{\epsilon_X} = \left( \begin{matrix} a & 1 \\ 1 - ba & -b \end{matrix} \right) \binom{X}{\epsilon_Y}
    \]

    Since \( \binom{X}{\epsilon_Y} \) consists of independent components, and both components are normally distributed, it follows that \( \binom{Y}{\epsilon_X} \) also follows a normal distribution.  

    Next, consider the covariance matrix \( \text{Cov}(Y, \epsilon_X) \):

    \[
    \text{Cov}(Y, \epsilon_X) = \left( \begin{matrix} a & 1 \\ 1 - ba & -b \end{matrix} \right) \text{Cov}(X, \epsilon_Y) \left( \begin{matrix} a & 1 \\ 1 - ba & -b \end{matrix} \right)^T
    \]

    For simplicity, assume that the components of \( \binom{X}{\epsilon_Y} \) have unit variances (if not, we can standardize the variables). Thus, we have:

    \[
    \text{Cov}(X, \epsilon_Y) = \left( \begin{matrix} 1 & 0 \\ 0 & 1 \end{matrix} \right)
    \]

    Since \( \left( \begin{matrix} a & 1 \\ 1 - ba & -b \end{matrix} \right) \) is an orthogonal matrix, we know that:

    \[
    \left( \begin{matrix} a & 1 \\ 1 - ba & -b \end{matrix} \right) \left( \begin{matrix} a & 1 \\ 1 - ba & -b \end{matrix} \right)^T = \left( \begin{matrix} 1 & 0 \\ 0 & 1 \end{matrix} \right)
    \]

    Thus, we conclude that:

    \[
    \text{Cov}(Y, \epsilon_X) = \left( \begin{matrix} 1 & 0 \\ 0 & 1 \end{matrix} \right)
    \]

    Since \( \binom{Y}{\epsilon_X} \) is a multivariate normal distribution and its components are uncorrelated, they must also be independent. Therefore, we have shown that the components of \( \binom{Y}{\epsilon_X} \) are independent.

    \textbf{Necessity:}  
    Since the components of \( \binom{Y}{\epsilon_X} \), namely \( Y \) and \( \epsilon_X \), are independent, we now look at the relationship between \( \binom{X}{\epsilon_Y} \) and \( \binom{Y}{\epsilon_X} \):

    \[
    \binom{X}{\epsilon_Y} = \left( \begin{matrix} a & 1 - ba \\ 1 & -b \end{matrix} \right) \binom{Y}{\epsilon_X}
    \]

    Since \( \binom{Y}{\epsilon_X} \) is independent and the linear combination \( \binom{X}{\epsilon_Y} \) is based on it, by the Skitovich-Darmois theorem, we conclude that \( \binom{Y}{\epsilon_X} \) must follow a normal distribution.

    Furthermore, by using the same reasoning as in the sufficiency proof, we can show that the components of \( \binom{Y}{\epsilon_X} \) are also independent.

    Thus, the proof is complete. 
    \end{proof}

Theorem~\ref{thm1} specifies the conditions under which causal direction can and cannot be identified, and rigorously proves the identifiability of the bivariate LiNGAM model.
Specifically, if \( Y \perp \epsilon_X \), then \( \epsilon_Y \) 
must follow a normal distribution. 
In the forward model, if the noise does not follow a normal distribution, 
the causal direction can be identified. However, 
if the noise satisfies the assumption of Gaussianity,
 the causal direction cannot be determined.

From a theoretical perspective, as long as the noise does not follow a normal distribution under certain conditions, the LiNGAM model can identify the causal direction. However, in practice, the only data available is usually observational data, and it is impossible to know in advance whether the noise follows a normal distribution. Therefore, hypothesis testing must be used. This leads to the following Theorem~\ref{thm2}.

\begin{theorem}
    \label{thm2}
    In the causal direction identification problem of the Bivariate LiNGAM , 
    let \( \varphi_{IT_i}(S) \) ,\(i \in \{1,2\}\)be the independence test mentioned in \eqref{eq1} and \eqref{eq2}, and \( \varphi_{GT_i}(S) \) ,\(i \in \{1,2\}\)
    be the Gaussianity test mentioned in \eqref{eq3} and \eqref{eq4}.
     Then:

     \begin{equation}
        \varphi_{IT_i}(S) = \varphi_{GT_i}(S), \quad \forall S, \; i \in \{1,2\}
    \end{equation}

    This means that both tests yield consistent conclusions for all sample data.
\end{theorem}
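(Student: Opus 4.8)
The plan is to derive the sample-level identity $\varphi_{IT_i}(S)=\varphi_{GT_i}(S)$ from the population-level biconditional supplied by Theorem~\ref{thm1}, and then to argue that the two test functions are deciding one and the same hypothesis within the model class. First I would fix the standing forward-model assumptions for $i=1$—linearity, acyclicity, and exogeneity $X\perp\epsilon_Y$—so that Theorem~\ref{thm1} applies. That theorem states $Y\perp\epsilon_X$ holds iff $X\perp\epsilon_Y$ holds and both are normal; since exogeneity already grants $X\perp\epsilon_Y$, this collapses to the clean equivalence $Y\perp\epsilon_X \iff \epsilon_Y\sim\mathcal{N}(0,\sigma_Y^2)$. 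Consequently the null events $H_{IT_{10}}$ and $H_{GT_{10}}$ coincide exactly as subsets of the admissible distributions, and the alternatives $H_{IT_{11}}$, $H_{GT_{11}}$ coincide as well. The case $i=2$ is handled by the same argument with the forward and reverse regressions exchanged, now using the reverse exogeneity $Y\perp\epsilon_X$ as the standing condition, so that $X\perp\epsilon_Y \iff \epsilon_X\sim\mathcal{N}(0,\sigma_X^2)$.

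The second step is to pass from ``the two hypotheses induce the same partition of the model class'' to ``$\varphi_{IT_i}(S)=\varphi_{GT_i}(S)$ for every sample $S$.'' Here I would note that a test is a measurable map $S\mapsto\{0,1\}$ targeting a fixed partition $(H_0,H_1)$; once Theorem~\ref{thm1} identifies the two partitions, testing $H_{GT_{i0}}$ is literally the same decision problem as testing $H_{IT_{i0}}$. If both procedures are instantiated from the same underlying statistic—concretely, if the Gaussianity decision is read off the kernel independence score rather than computed from a separately calibrated normality statistic—then their outputs agree on every realization, which is the asserted equivalence. To make the shared statistic well-defined on $S$ I would also record that the reverse residual is the explicit affine image $\epsilon_X=(1-ab)X-b\epsilon_Y$ of the forward pair, so that both tests are deterministic functions of the same sample.

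The main obstacle is exactly this second step: population equivalence of two hypotheses does not by itself force two independently designed finite-sample tests to return identical decisions on every $S$, since two consistent tests of the same null can disagree for small $n$. I would therefore make explicit that the claimed ``$\forall S$'' equivalence is asserted for tests that decide the common hypothesis through the same statistic, so that substituting the independence test for the Gaussianity test can never change the conclusion; the substantive mathematical content is entirely carried by Theorem~\ref{thm1}, while the statistical reduction in this step is what must be stated carefully to avoid overclaiming beyond matched, statistic-sharing procedures.
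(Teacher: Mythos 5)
Your proposal follows essentially the same route as the paper's own proof: both reduce the claim to the biconditional supplied by Theorem~\ref{thm1} (under the standing exogeneity assumption, \(Y \perp \epsilon_X \iff \epsilon_Y \sim \mathcal{N}(0,\sigma_Y^2)\)), identify the two null hypotheses as one and the same, and dispatch \(i=2\) by symmetry. The only substantive difference is that you explicitly flag --- and patch via the shared-statistic requirement --- the population-to-sample gap that the paper's proof crosses silently by treating ``the test accepts'' as synonymous with ``the property holds,'' so your version makes visible exactly what the ``\(\forall S\)'' claim presupposes.
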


\begin{proof}
    We first show \( \varphi_{IT_1}(S) = \varphi_{GT_1}(S) \).
    When \( \varphi_{IT_1}(S) = 1 \), it means that \( Y \) and \( \epsilon_X \) are independent. 
    According to Theorem 1 (with \(X\) and \(Y\) interchanged), if \( Y \) and \( \epsilon_X \) are independent, 
    then \( \epsilon_Y \) must follow a normal distribution. Therefore, the Gaussianity test 
    \( \varphi_{GT_1}(S) \) will conclude that \( \epsilon_Y \sim \mathcal{N}(0,\sigma_Y^2) \), 
    i.e., \( \varphi_{GT_1}(S) = 1 \). Thus, when \( \varphi_{IT_1}(S) = 1 \), we also have 
    \( \varphi_{GT_1}(S) = 1 \), and both tests give consistent conclusions.

    When \( \varphi_{IT_1}(S) = 0 \), it means that \( Y \) and \( \epsilon_X \) are not independent, 
    indicating a dependency between them. In this case, \( \epsilon_Y \) cannot follow a normal distribution, 
    because if \( \epsilon_Y \) were Gaussian, it would be independent of \( Y \). Therefore, the 
    Gaussianity test \( \varphi_{GT_1}(S) \) will conclude that \( \epsilon_Y \) does not follow a normal 
    distribution, i.e., \( \varphi_{GT_1}(S) = 0 \). Hence, when \( \varphi_{IT_1}(S) = 0 \), we also have 
    \( \varphi_{GT_1}(S) = 0 \), and both tests yield consistent conclusions.

    For \( i=2 \), the same argument applies after interchanging \( X \) and \( Y \), which shows 
    \( \varphi_{IT_2}(S) = \varphi_{GT_2}(S) \). Combining both cases yields the claim.
\end{proof}

Gaussianity tests often lack accuracy and power: 
they are sample-size sensitive and falter under 
misspecification. Independence tests are more mature
 and robust; Theorem~\ref{thm2} proves their equivalence in 
 bivariate LiNGAM, so the latter can replace the 
 former when Gaussianity checks underperform.

Specifically, Shapiro–Wilk (SW) excels with small samples but over-rejects in large ones and weakens under leptokurtic distributions; Anderson–Darling (AD) shares the small-sample merit yet loses accuracy and speed as data grow; Jarque–Bera (JB), though universally applicable, becomes unstable with few observations and loses power when tails alone deviate. These traditional methods hinge on specific sample sizes and distributional forms, rendering them unreliable in complex scenarios, whereas independence tests remain flexible across varying data conditions.

By Theorem~\ref{thm2}, for each $i\in\{1,2\}$ the independence and Gaussianity tests are equivalent,
$\varphi_{IT_i}(S)=\varphi_{GT_i}(S)$. In particular, let
\[
\varphi_1 =
\begin{cases}
1, & \text{if } Y \perp \epsilon_X \ \ (\text{equivalently } \epsilon_Y \sim \mathcal{N}(0,\sigma_Y^2)),\\
0, & \text{otherwise,}
\end{cases}
\]
\[
\varphi_2 =
\begin{cases}
1, & \text{if } X \perp \epsilon_Y \ \ (\text{equivalently } \epsilon_X \sim \mathcal{N}(0,\sigma_X^2)),\\
0, & \text{otherwise.}
\end{cases}
\]
Hence, the joint decision $(\varphi_1,\varphi_2)$ can be interpreted as follows:
\begin{itemize}
\item \textbf{$(\varphi_1,\varphi_2)=(1,0)$ $\Rightarrow$ $Y\to X$.}
Accepting $Y \perp \epsilon_X$ (reverse regression residual independent of the predictor) and
rejecting $X \perp \epsilon_Y$ (forward regression residual dependent on the predictor)
is exactly the LiNGAM exogeneity pattern for the causal direction $Y\to X$.

\item \textbf{$(\varphi_1,\varphi_2)=(0,1)$ $\Rightarrow$ $X\to Y$.}
Symmetric to the previous case: independence holds only in the forward regression, indicating $X\to Y$.

\item \textbf{$(\varphi_1,\varphi_2)=(1,1)$ $\Rightarrow$ Undetermined (noise effectively Gaussian).}
Both independence tests are accepted, and by Theorem~\ref{thm2} both Gaussianity tests are also accepted,
so $\epsilon_X,\epsilon_Y$ behave as Gaussian. In the linear-Gaussian setting the direction is not identifiable,
thus we abstain.

\item \textbf{$(\varphi_1,\varphi_2)=(0,0)$ $\Rightarrow$ Inconclusive.}
Both independence tests are rejected (and, equivalently, both Gaussianity tests are rejected).
This offers insufficient evidence for either direction and may indicate finite-sample power issues,
model misspecification (e.g., nonlinearity, hidden confounding), or near-Gaussian disturbances.
\end{itemize}

\section{Algorithms}

Based on the conclusion in Theorem~\ref{thm2}, the GaussDetect-LiNGAM algorithm is proposed. 

By Theorem~\ref{thm2}, the independence tests and Gaussianity tests are equivalent decision rules for the bivariate case. 
This allows us to release the a priori knowledge requirement on the noise non-Gaussianity: 
GaussDetect-LiNGAM does not assume that the disturbance terms are known to be non-Gaussian in advance; 
instead, it diagnoses (non-)Gaussianity from the data via the tests and proceeds accordingly.
Importantly, this does not mean that non-Gaussian noise is unnecessary for identifiability; 
it only means we do not need to know it a priori. When both residuals are Gaussian (both tests accepted), the algorithm abstains and declares LiNGAM inapplicable.

In contrast, classical LiNGAM methods require the full set of assumptions to hold \emph{ex ante} and simultaneously:
(i) linearity, (ii) acyclicity, (iii) exogeneity (no hidden confounding), and (iv) non-Gaussian disturbances. 
All four are indispensable for identifiability in the traditional framework. 
GaussDetect-LiNGAM keeps (i)–(iii) as standing assumptions and replaces the \emph{a priori} imposition of (iv) 
with a data-driven diagnostic, as summarized in Algorithm~\ref{alg:1}.
\begin{algorithm}[tb]
  \caption{GaussDetect-LiNGAM}
  \label{alg:1}
  \begin{algorithmic}
    \STATE {\bfseries Input:} Dataset $\mathbf{X} = \{X, Y\}$ (two continuous variables)
    \STATE {\bfseries Output:} Causal direction: $X \to Y$, $Y \to X$, ``Gaussian noise'', or ``Inconclusive''
    \STATE
    \STATE {\bfseries Step 1:} Standardize variables $X$ and $Y$
    \STATE
    \STATE {\bfseries Step 2:} Fit regression models
    \STATE \hspace{0.5cm} Forward: $Y = aX + \epsilon_Y$
    \STATE \hspace{0.5cm} Reversed: $X = bY + \epsilon_X$
    \STATE
    \STATE {\bfseries Step 3:} Perform independence tests
    \STATE \hspace{0.5cm} $H_{10}$: Test $X \perp \epsilon_Y$
    \STATE \hspace{0.5cm} $H_{20}$: Test $Y \perp \epsilon_X$
    \STATE
    \STATE {\bfseries Step 4:} Determine causal direction
    \IF{$H_{10}$ accepted and $H_{20}$ rejected}
        \STATE $X \to Y$
    \ELSIF{$H_{20}$ accepted and $H_{10}$ rejected}
        \STATE $Y \to X$
    \ELSIF{$H_{10}$ and $H_{20}$ both accepted}
        \STATE ``Gaussian noise'' (LiNGAM not applicable)
    \ELSE
        \STATE ``Inconclusive''
    \ENDIF
    \STATE
    \STATE {\bfseries return} causal direction
  \end{algorithmic}
\end{algorithm}
\begin{table}[t]
  \caption{LiNGAM Variants Comparison}
  \label{tab:lingam-comparison}
  \begin{center}
    \begin{small}
      \begin{tabularx}{\linewidth}{l>{\centering\arraybackslash}X>{\centering\arraybackslash}X>{\centering\arraybackslash}X>{\centering\arraybackslash}X}
        \toprule
        \textbf{Variant} & \textbf{L, A, E} & \textbf{NG} & \textbf{RoV} & \textbf{Return} \\
        \midrule
        ICA         & Must    & Must      & Global    & DAG  \\
        Direct      & Must    & Must      & Global    & DAG  \\
        Pairwise    & Must    & Must      & Local     & DAG  \\
        GaussDetect & Must    & Optional  & Local     & DAG,ND \\
        \bottomrule
      \end{tabularx}
    \end{small}
  \end{center}
  \vskip -0.1in
  \begin{flushleft}
    \small\textit{Note:} All methods are LiNGAM variants. L = Linearity, A = Acyclicity, E = Exogeneity, NG = Non-Gaussianity, RoV = Range of Variables (Global/Local), DAG = Directed Acyclic Graph, ND = Noise Distribution. Must = All three assumptions (L, A, E) must be satisfied.
  \end{flushleft}
\end{table}

The advantage of the GaussDetect-LiNGAM algorithm lies in its ability to automatically detect the Gaussian nature of the noise during causal inference, without requiring a priori assumptions about the noise distribution. By performing independence tests, GaussDetect-LiNGAM can handle data with varying noise types, avoiding the dependency on non-Gaussian noise assumptions typical of traditional LiNGAM algorithms. This feature makes GaussDetect-LiNGAM more adaptable in scenarios with unknown or changing noise, providing more accurate and reliable causal inference results.
\begin{figure}[ht]
  \vskip 0.2in
  \begin{center}
    \centerline{\includegraphics[width=1.1\columnwidth]{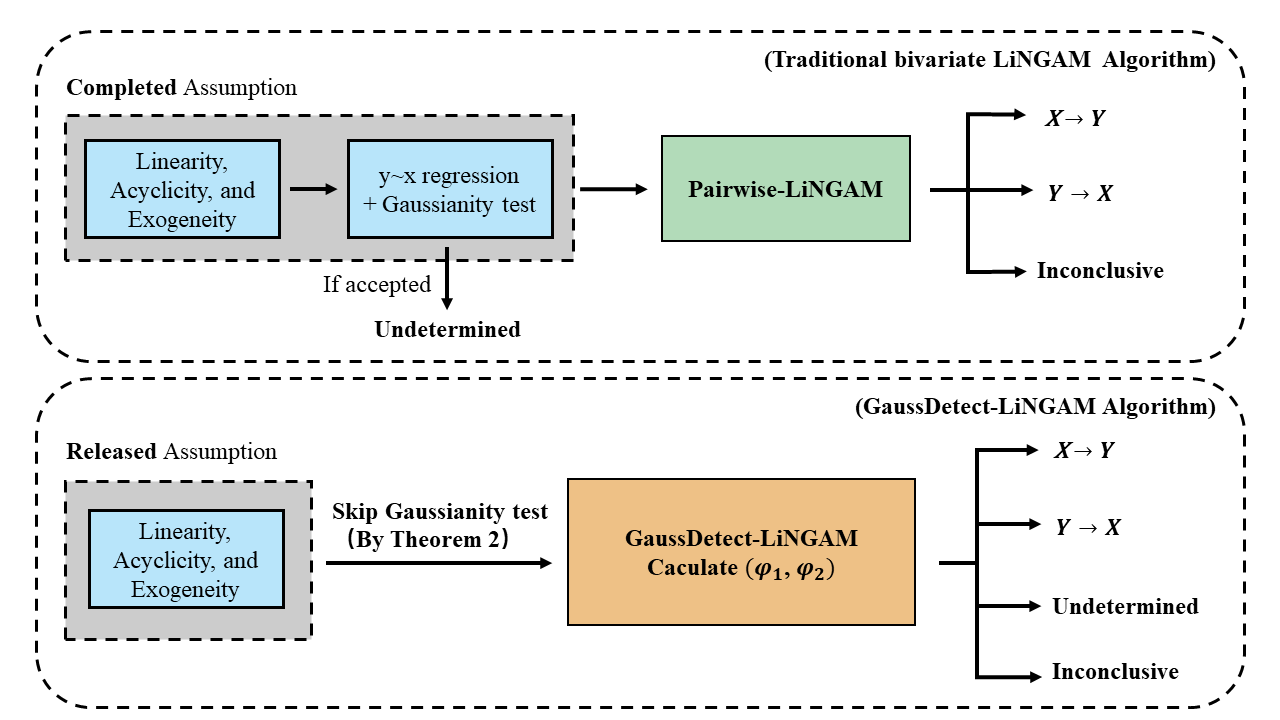}}
    \caption{
      This figure compares the Traditional Pairwise LiNGAM and GaussDetect-LiNGAM algorithms. Pairwise LiNGAM requires completed assumptions (linearity, acyclicity, exogeneity, and non-Gaussianity) and involves two steps: a Gaussianity test followed by an independence test. In contrast, GaussDetect-LiNGAM only requires the released assumptions (linearity, acyclicity, and exogeneity) and can infer causality with just one independence test, without needing the Gaussianity test.
    }
    \label{fig:AlgorithmComparison}
  \end{center}
\end{figure}

\section{Experiments}

\subsection{Experimental Setup}

Experiment 1 aims to validate the correctness of Theorem~\ref{thm2}
 by evaluating its accuracy through the consistency rate
  (Consistency Rate). Experiment 2 aims to compare 
  GaussDetect-LiNGAM and Pairwise-LiNGAM across different noise types (Gaussian and Non-Gaussian) and varying sample sizes (400, 800, 1600).
   The core goal of this experiment is to evaluate the differences
   between these two algorithms in terms of tests per decision (TPD).

\subsubsection{Sample Generation}

We generated two sets of samples:
1.Gaussian Noise: Samples drawn from the standard Gaussian distribution.
2.Non-Gaussian Noise: Including Exponential, Laplace, and Poisson distributions. Each of these non-Gaussian noise types generated the same number of samples as the Gaussian noise.The generated datasets (\(X\) and \(Y\)) were divided into two parts:
1.\(X\) is a random sample generated from the standard normal distribution.
2.\(Y\) is generated by the formula \(Y = 2X +  \epsilon_{Y}\), where the \(\text{noise}\) is composed of either Gaussian or non-Gaussian noise.

Each set of samples (Gaussian and non-Gaussian noise) was 
divided into multiple small batches. Each batch underwent Gaussianity tests and independence tests, and we calculated the tests per decision (TPD) and consistency rate (Consistency Rate).

\subsection{Consistency Rate Experiment}

The consistency rate (Consistency Rate) measures whether the results of the Gaussianity test and independence test in each batch are consistent. This experiment aims to evaluate whether GaussDetect-LiNGAM can consistently determine the validity of the data under different noise types and sample sizes.

\subsubsection{Experiment Procedure}

1.For each dataset, the data is divided into 
    multiple small batches.
    
2.First, a Gaussianity test is performed.

3.For samples that pass the Gaussianity test, we perform regression on the residual noise and perform the independence test between the residual noise and the predictor variable.

4.Record whether each batch yields consistent conclusions (whether the Gaussianity test and independence test results are consistent).

5.Calculate the consistency rate for each noise type and sample size.

\subsubsection{Results}

\begin{figure}[ht]
  \vskip 0.2in
  \begin{center}
    \centerline{\includegraphics[width=\columnwidth]{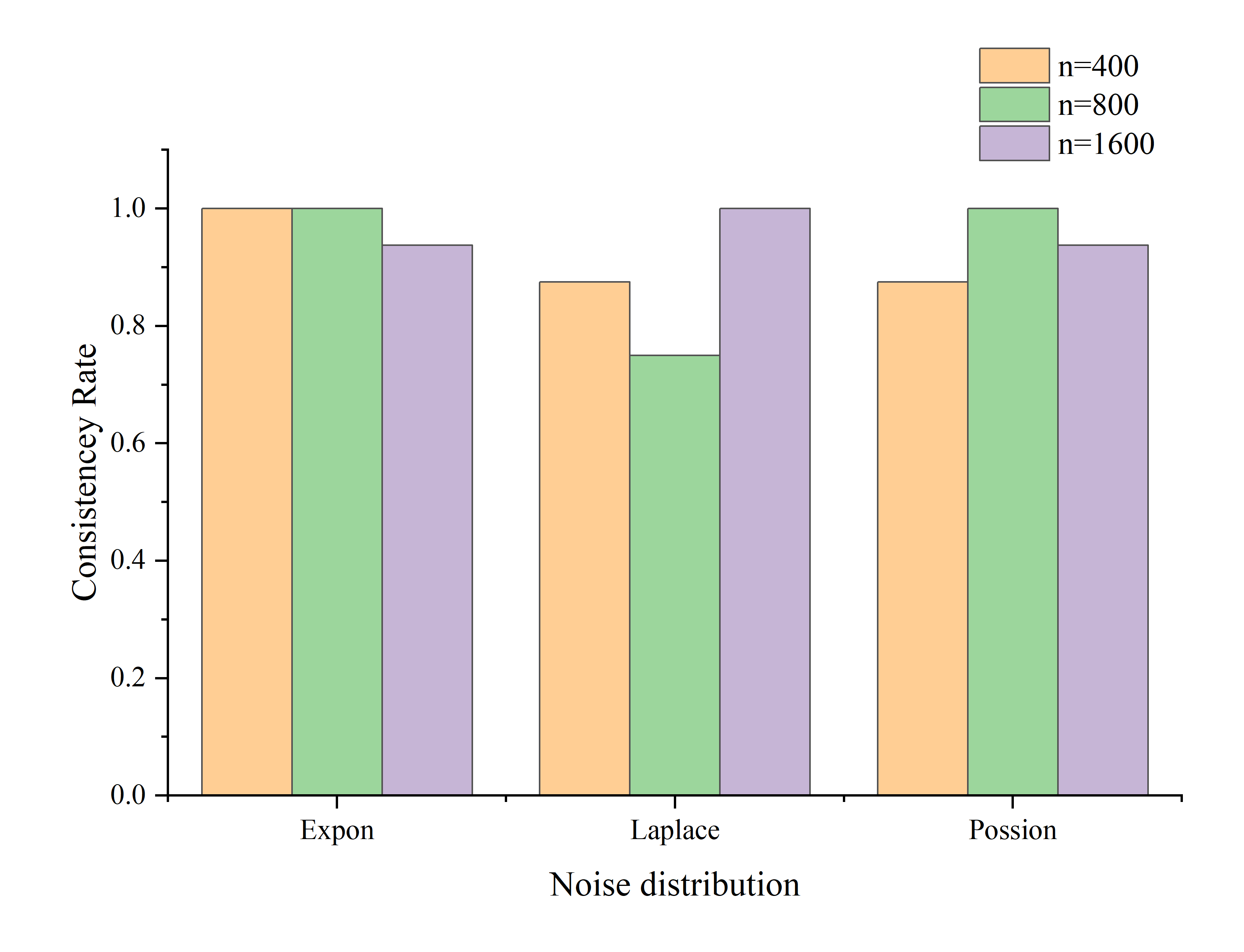}}
    \caption{
      The figure illustrates the Consistency Rate across different noise distributions (Exponential, Laplace, and Poisson) and sample sizes (n=400, n=800, n=1600). The bars in different colors represent the performance of the consistency rate for each sample size.
    }
    \label{fig:ConsistenceyRate}
  \end{center}
\end{figure}

As shown in the figure~\ref{fig:ConsistenceyRate}, GaussDetect-LiNGAM consistently demonstrates a high consistency rate, especially in larger sample sizes, with many cases achieving a consistency rate of 1. This indicates that GaussDetect-LiNGAM is effective at providing consistent judgments across different noise types.

In the case of non-Gaussian noise (such as Laplace and Poisson distributions), we also observe fairly consistent results. Although the Gaussianity test does not always pass, when the noise type is close to Gaussian, GaussDetect-LiNGAM still achieves a high consistency rate.

\subsection{Tests per Decision (TPD) Experiment}

TPD(Tests per Decision) refers to the average number of tests required by GaussDetect-LiNGAM and Pairwise-LiNGAM to make a decision. This experiment compares the decision-making efficiency of both algorithms across different sample sizes and noise types.

\subsubsection{Experiment Procedure}

1. For each dataset, the data is divided into multiple small batches.

2.For GaussDetect-LiNGAM, only independence tests are performed, and the number of tests per decision is calculated.

3.For Pairwise-LiNGAM, we first perform a Gaussianity test, and if the Gaussianity test passes, we proceed with two independence tests.

4.Calculate TPD (tests per decision) and compute the average number of tests for each sample size and noise type.

\subsubsection{Results}

\begin{figure}[ht]
  \vskip 0.2in
  \begin{center}
    \centerline{\includegraphics[width=\columnwidth]{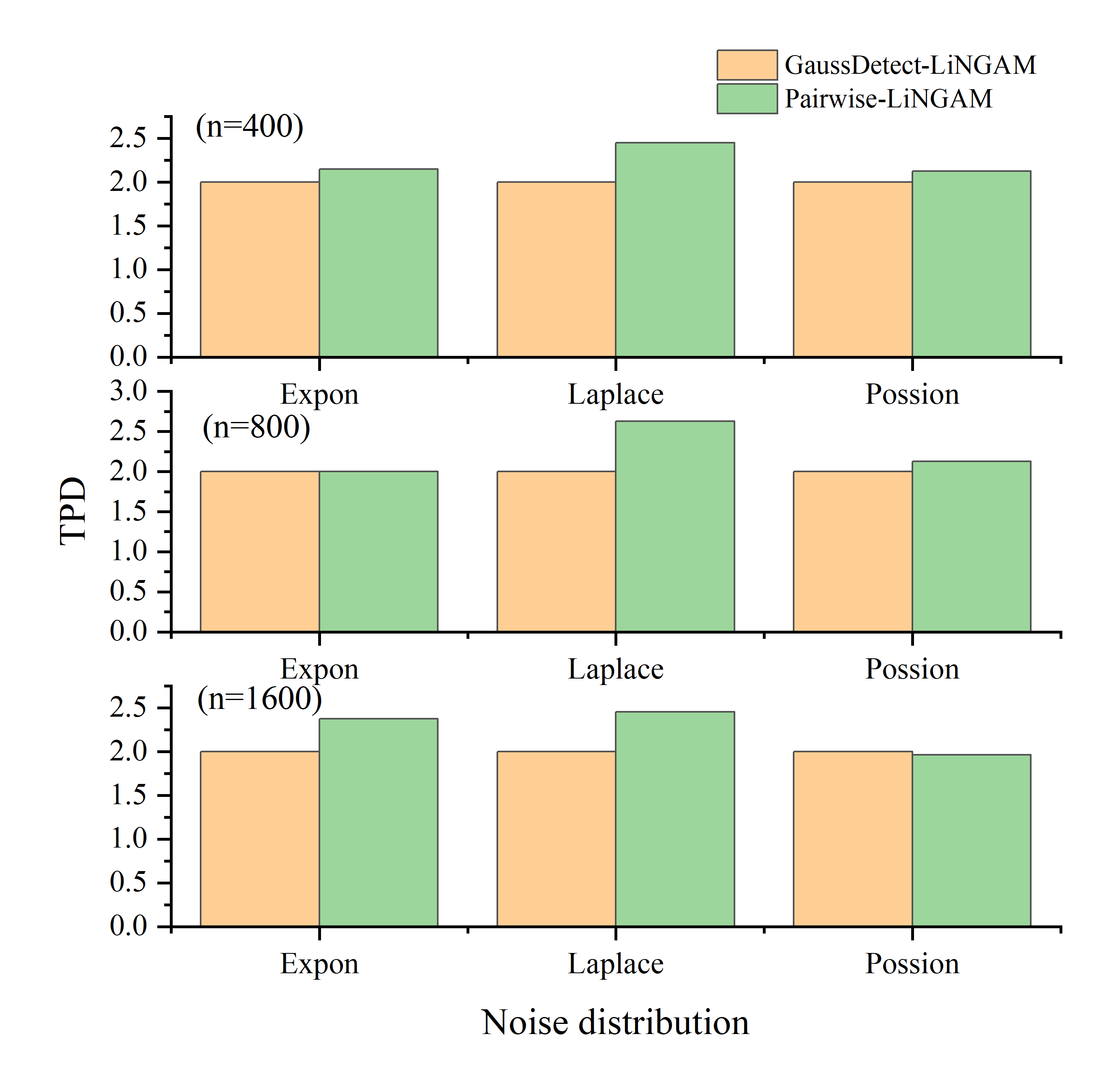}}
    \caption{
      This figure illustrates the Tests per Decision (TPD) required by GaussDetect-LiNGAM and Pairwise-LiNGAM across different noise distributions (Exponential, Laplace, and Poisson) and sample sizes (n=400, n=800, n=1600). The bars in different colors represent the performance of GaussDetect-LiNGAM (in orange) and Pairwise-LiNGAM (in green).
    }
    \label{fig:TPD}
  \end{center}
\end{figure}

As shown in the figure~\ref{fig:TPD}, GaussDetect-LiNGAM requires only one independence test per decision, so its TPD is generally low, around 2 tests. In contrast, Pairwise-LiNGAM requires first performing the Gaussianity test, followed by two independence tests, which results in a higher TPD, typically greater than 2 tests. This suggests that GaussDetect-LiNGAM is more efficient in terms of the number of tests required per decision, while Pairwise-LiNGAM has a higher computational cost due to the additional Gaussianity test.

\subsection{Experiment Summary}

Consistency Rate: The results demonstrate the correctness of
 Theorem~\ref{thm2} and the accuracy of GaussDetect-LiNGAM. 
 It consistently achieves a high consistency rate across all 
 noise types, particularly in cases involving non-Gaussian noise 
 such as Laplace and Poisson distributions. This validates the 
 effectiveness of GaussDetect-LiNGAM in handling non-Gaussian 
 noise while maintaining reliable causality inference.
  
Tests per Decision (TPD): GaussDetect-LiNGAM requires only independence tests per decision, which results in a lower TPD of around 2. On the other hand, Pairwise-LiNGAM requires an additional Gaussianity test and two independence tests, resulting in a higher TPD, generally greater than 2.

These results indicate that GaussDetect-LiNGAM is more efficient, especially when handling non-Gaussian noise, with a lower computational cost, while Pairwise-LiNGAM provides more detailed analysis but requires more tests per decision and incurs higher computational overhead.

\section{Conclusion}
We have shown that, in the bivariate LiNGAM setting, the Gaussianity of noise can be equivalently assessed through an independence test in the reverse model. This theoretical result eliminates the need for explicit non-Gaussianity assumptions and enables the use of more flexible and powerful independence testing methods. The proposed GaussDetect-LiNGAM algorithm simplifies causal direction identification while maintaining robust performance under diverse data conditions. These contributions make LiNGAM more accessible and reliable for practical causal inference applications.

\bibliography{example_paper}
\bibliographystyle{icml2026}

\end{document}